\documentclass{article}

\usepackage[preprint, nonatbib]{neurips_2025}

\usepackage[utf8]{inputenc}
\usepackage[T1]{fontenc}
\usepackage{hyperref}
\hypersetup{colorlinks=true, citecolor=blue, linkcolor=blue,
            urlcolor=blue}
\usepackage{url}
\usepackage{booktabs}
\usepackage{amsfonts}
\usepackage{microtype}
\usepackage{xcolor}

\usepackage{amsmath,amssymb,amsthm}
\usepackage{mathrsfs}          
\usepackage{graphicx}

\theoremstyle{plain}
\newtheorem{theorem}{Theorem}[section]
\newtheorem{proposition}[theorem]{Proposition}
\newtheorem{lemma}[theorem]{Lemma}
\newtheorem{corollary}[theorem]{Corollary}
\theoremstyle{definition}

\newtheorem{remark}[theorem]{Remark}
\numberwithin{equation}{section}

\newcommand{\R}{\mathbb{R}}
\newcommand{\SO}{\mathrm{SO}}
\newcommand{\Ort}{\mathrm{O}}
\newcommand{\Stab}{\mathrm{Stab}}
\newcommand{\Hol}{\mathrm{Hol}}
\newcommand{\sgn}{\mathrm{sign}}
\newcommand{\Rmin}{\widetilde{\mathcal{R}}}
\newcommand{\A}{\mathscr{A}}
\newcommand{\Uc}{\mathcal{U}}
\newcommand{\eps}{\varepsilon}

\title{Linguistic Holonomy and Statistical Watermarks:\\
Inner Geometry of Meaning-Preserving Transformations}

\author{%
  Daniele Corradetti \\
  Grupo de F\'\i sica Matem\'atica, Instituto Superior T\'ecnico \\
  Av.\ Rovisco Pais, 1049-001 Lisboa, Portugal \\
  Departamento de Matem\'atica, Universidade do Algarve \\
  Campus de Gambelas, 8005-139 Faro, Portugal \\
  \texttt{danielecorradetti@tecnico.ulisboa.pt} \\
}

\begin{document}

\maketitle

\begin{abstract}
Statistical watermarks for language models live in the freedom of the
signifier: they choose among tokens that are nearly equivalent in
meaning, and they are therefore eroded by exactly those transformations
which move the form of a text while leaving its content in place. The
literature measures such transformations by their endpoint, through the
semantic similarity between the original and the rewritten text. We show
that the endpoint is the wrong statistic. Adapting the formalism of
linguistic loops, we prove that the invariant of a chain of
meaning-preserving transformations factorises canonically into an
endpoint part and a holonomy in the stabiliser of the initial state, the
second of which the semantic deficit cannot see; the loop rotation is
parallel transport on the unit sphere of the embedding space, so that
the analogy with the Wilson loop becomes a theorem rather than a figure
of speech. On the side of the detector we prove an exact identity: the
residual statistic is proportional to the number of positions whose
seeding window survived intact, from which the decay law $\rho^{h+1}$
follows as the independent-edit corollary. The identity has a
disconcerting consequence, which we confirm to three decimal places: at
one and the same retention rate the surviving signal may be one half of
the original, one quarter of it, or exactly nothing, according only to
where the edits fall.
\end{abstract}

\section{Introduction and Motivation}

When Raymond Queneau published his \emph{Exercices de style} in 1947,
inspired by Bach's \emph{Art of Fugue}, he wrote the same trivial
anecdote ninety-nine times. A man is jostled on a bus; two hours later a
friend advises him to move a button on his overcoat. Nothing else
happens. What varies is everything else: the register, the tense, the
person, the alphabet, the sonnet form, the language of a
mathematician. The content is held fixed by fiat, and the form is
allowed to move as far as it can. Queneau took himself to be doing
something intrinsically geometric, and named an early version of the
work \emph{Dod\'eca\`edre}; the intuition, as was argued in \cite{CM25},
is not casual at all.

Eighty years later the same operation has acquired an adversarial use.
Since 2 August 2026 the outputs of at least one major language model
carry a machine-readable mark, in response to the transparency
obligations of Article 50 of the European AI Act \cite{Anthropic26}. The
published description of such marks is careful: the mark does not change
meaning, quality or readability, it survives copying and light editing,
and it is stated to be lost under paraphrase, heavy editing, translation
and mixing with other text. Whoever wishes to remove the mark, then, is
being invited to perform an \emph{Exercice de style}. The question of
how much of the mark survives such an exercise is the subject of this
article.

\subsection*{The landscape}

The dominant construction is the \emph{green-list} watermark of
Kirchenbauer \emph{et al.} \cite{KGW23}: at each step a
pseudo-random subset of the vocabulary, of relative size $\gamma$, is
declared green on the basis of a secret key and of the $h$ preceding
tokens, and the logits of green tokens are raised by a bias $\delta_b$.
Detection counts green tokens and reads off a $z$-score. Setting $h=0$
gives the Unigram scheme of Zhao \emph{et al.} \cite{Zhao24}, whose
robustness is provable. A different family, initiated by Aaronson and
developed by Kuditipudi \emph{et al.} \cite{Kud24} and by Christ, Gunn
and Zamir \cite{CGZ24}, leaves the output distribution mathematically
untouched and replaces instead the source of randomness in the sampler;
these are the \emph{distortion-free} schemes. A production deployment at
scale, using tournament sampling, is described in \cite{Dath24}. A
synoptic view is given in Table~\ref{tab:families}.

\begin{table}[t]
\centering
\small
\begin{tabular}{llccl}
\toprule
Family & Mechanism & Context $h$ & Output law & What an edit destroys \\
\midrule
Green list \cite{KGW23} & logit bias & $\ge 1$ & perturbed &
  token \emph{and} its $h$ predecessors \\
Unigram \cite{Zhao24} & logit bias & $0$ & perturbed &
  the token alone \\
Exponential \cite{Kud24,CGZ24} & sampling rule & $\ge 0$ & exact &
  token \emph{and} seeding context \\
Tournament \cite{Dath24} & sampling rule & $\ge 1$ & exact &
  token \emph{and} seeding context \\
Semantic \cite{Hou24a} & sentence partition & --- & perturbed &
  the sentence embedding \\
\bottomrule
\end{tabular}
\caption{\emph{Synoptic table of the families of text watermarks. The
last column is the one that matters for this article: in every family
except the Unigram scheme, an edit damages not only the position it
touches but every position that used it as seeding context. It is this
asymmetry which the geometry of Section~\ref{sec:holonomy} and the
identity of Section~\ref{sec:watermark} make precise.}}
\label{tab:families}
\end{table}

Against these constructions stands a literature of attacks. Krishna
\emph{et al.} \cite{Kri23} introduced a dedicated paraphraser; Sadasivan
\emph{et al.} \cite{Sad23} iterated it; a recent large-scale study
\cite{Chain26} chains rewritings up to five deep and reports detection
falling from $87.9\%$ on the original outputs to $4.86\%$ after five
hops. Kirchenbauer \emph{et al.} \cite{KGW24} had already observed, on
empirical grounds, that a longer seeding context makes a watermark less
robust to editing.

All of these studies share a habit of measurement. An attack is scored
by its \emph{endpoint}: by the semantic similarity between the original
and the final text, and by a count of how many tokens or hops separate
them. The intermediate states are treated as scaffolding.

\subsection*{The gap}

This habit is not innocent. Consider two chains of rewritings that begin
at the same text and end at texts of identical meaning. One proceeds
directly; the other wanders through three languages and returns. By any
endpoint measure they are the same attack. Are they?

To ask the question one needs a formalism in which the intermediate
states are part of the object rather than part of the apparatus. Such a
formalism exists. In \cite{CM25} a \emph{linguistic loop} was defined as
a chain of transformations that move the signifier while preserving the
semantic core, its \emph{semantic deficit} was defined as the cosine
distance between the first and last embedded states, and a rotation
$\mathrm{R}_\Uc \in \SO(n)$ was associated with the entire chain by
composing the minimal rotations between consecutive states. The
signature of a quadratic form built from $\mathrm{R}_\Uc$ was proposed
as the invariant of the loop, and the construction was compared, in the
introduction of that paper, with the Wilson loop of lattice gauge
theory.

We take up that formalism, and we find three things.

The first is that the proposed invariant collapses. The representing
matrix $I_n - \mathrm{R}^{*}_\Uc$ is always positive semidefinite, so its
Sylvester signature is $(2p,0,n-2p)$ and carries no more information
than its rank. The negative index, which is where a signature normally
keeps its content, is identically zero.

The second is that the analogy with the Wilson loop is not an analogy.
The minimal rotation of \cite{CM25} \emph{is} parallel transport along
the minimising geodesic of the unit sphere; $\mathrm{R}_\Uc$ is parallel
transport along the geodesic polygon of the chain; and the quantity that
the semantic deficit discards is precisely a holonomy, an element of
$\SO(n-1)$ fixing the initial state. Endpoint and path separate
canonically, and they are independent: neither constrains the other.

The third is that on the side of the detector there is an exact law, and
it is not the law the field has been assuming. The residual statistic
after an attack is proportional to the number of scored positions whose
entire seeding window survived. When the edits are independent this
gives $\rho^{h+1}$, which explains in closed form the trend reported in
\cite{KGW24}. When they are not --- and real edits never are --- the
retention rate $\rho$ does not determine the residual at all.

\subsection*{Contribution}

In this work we prove that the invariant of a linguistic loop
factorises canonically as $\mathrm{R}_\Uc = \mathrm{R}_{\mathrm{dir}} H$
with $H$ in the stabiliser of the base point, that the semantic deficit
depends only on the first factor, and that the two factors are
functionally independent; we identify $H$ as the Riemannian holonomy of
the geodesic polygon of the chain, which on the two-sphere is the area
of the enclosed triangle; we show that the signature invariant of
\cite{CM25} degenerates to a rank and is superseded by the angle
spectrum; and we prove the intact-window identity for green-list and
exponential detectors, with its corollary that an adversary who knows
the context width $h$ can annihilate the signal while retaining a
fraction $1 - 1/(h+1)$ of the tokens. Every statement is verified
numerically against exact detectors whose keys we hold, and the
geometric machinery is applied to real chains of round-trip machine
translation.

In Section~\ref{sec:loops} we recall, in a self-contained form, the
apparatus of linguistic loops. In Section~\ref{sec:signature} we prove
the degeneracy of the signature. Section~\ref{sec:holonomy} contains the
factorisation, the identification with parallel transport and the
Gauss--Bonnet corollary. Section~\ref{sec:watermark} turns to the
detector and proves the intact-window law. Section~\ref{sec:experiments}
reports the experiments, including a preregistered test whose outcome we
report whichever way it fell. Section~\ref{sec:conclusions} discusses
what the results do and do not license.

\section{Linguistic loops and their invariants}\label{sec:loops}

We recall the apparatus of \cite{CM25}, in the form in which we shall
use it. The reader who knows that paper may skip to
Section~\ref{sec:signature}; the reader who does not should find here
everything the sequel requires.

Language is not a metric space, but its images under an embedding map
are. A \emph{metrizable linguistic space} is a triple
$(\A,\psi,d)$ where $\A$ is a set of linguistic elements --- words,
phrases, whole propositions --- in a context of interest, and
\begin{equation}\label{eq:embed}
\psi : \A \longrightarrow \R^n , \qquad \lambda \longmapsto \psi(\lambda),
\end{equation}
is an embedding map, generally neither injective nor surjective. A
distance $d_*$ on $\R^n$ induces a \emph{semantic distance} on $\A$ by
composition,
\begin{equation}\label{eq:semdist}
d(\lambda,\nu) := d_*\bigl(\psi(\lambda),\psi(\nu)\bigr),
\end{equation}
and throughout this article $d_*$ is the cosine distance
\begin{equation}\label{eq:cosine}
d_{*,\cos}(x,y) = 1 - \frac{x\cdot y}{\|x\|\,\|y\|}.
\end{equation}
The choice is not innocuous and we shall return to it in
Remark~\ref{rem:group}: it is what makes the whole construction
orthogonally natural rather than linearly natural.

A \emph{linguistic transformation} $U$ is a map of $\A$ into itself which
acts on the signifier while preserving, in a controlled way, the
semantic core. In \cite{CM25} three requirements are imposed: closure,
$U(\lambda) \in \A$; reversibility, the existence of an approximate
inverse $U^{-1}$ with $d(U^{-1}(U(\lambda)),\lambda) < \eps$ for a fixed
threshold $\eps$; and coherence, the requirement that similar elements
have similar images. Translation between languages, conversion to the
negative or interrogative form, dialectal or stylistic adaptation,
expansion and synthesis, and paraphrase are all linguistic
transformations; inventing a story is not, because its generative
content is too large for the original to be reconstructible.

Given a sequence $\Uc = \{\mathbb{1}, U_1, \dots, U_L\}$ of such
transformations and an initial element $\lambda$, the iterated
composition produces a chain in $\A$, whose image under $\psi$ is a
chain of vectors
\begin{equation}\label{eq:chain}
\psi(\Uc(\lambda)) = \{v_0, v_1, \dots, v_L\}, \qquad
v_0 = \psi(\lambda), \quad
v_i = \psi\bigl((U_i \circ \cdots \circ U_1)(\lambda)\bigr).
\end{equation}
The \emph{semantic deficit} of the chain is the distance between its
extremes,
\begin{equation}\label{eq:deficit}
\delta_\Uc(\lambda) := d_*(v_0, v_L),
\end{equation}
and when $\delta_\Uc < \xi$ for a fixed threshold $\xi$ the sequence is
called a \emph{linguistic loop}. Small deficit means the meaning came
back; the form, in the meantime, may have gone anywhere.

To capture where it went, one rewrites the chain as a sequence of
rotations. For $x,y \in \R^n$ neither zero nor antipodal, write
$\hat{x} = x/\|x\|$ and let
\begin{equation}\label{eq:minrot}
\Rmin^{x,y} := I_n + \bigl(\hat{y}\hat{x}^T - \hat{x}\hat{y}^T\bigr)
+ \frac{1}{1 + \hat{x}^T\hat{y}}
\bigl(\hat{y}\hat{x}^T - \hat{x}\hat{y}^T\bigr)^2 \; \in \; \SO(n)
\end{equation}
be the \emph{minimal rotation} carrying $\hat{x}$ to $\hat{y}$. Its
minimality is the statement that it rotates in the plane spanned by $x$
and $y$ only, acting as the identity on the orthogonal complement. The
whole chain is then summarised by the ordered product
\begin{equation}\label{eq:loopmatrix}
\mathrm{R}_\Uc := \Rmin^{v_{L-1},v_L} \Rmin^{v_{L-2},v_{L-1}}
\cdots \Rmin^{v_0,v_1} \; \in \; \SO(n),
\end{equation}
which satisfies $\mathrm{R}_\Uc \hat{v}_0 = \hat{v}_L$ and, unlike the
deficit, remembers the intermediate states. Writing
$\mathrm{R}^{*}_\Uc = \tfrac12(\mathrm{R}_\Uc + \mathrm{R}_\Uc^T)$ for
the symmetric part, an elementary computation recovers the deficit as a
quadratic form,
\begin{equation}\label{eq:quadform}
\delta_\Uc(\lambda) = Q_\Uc(\hat{v}_0,\hat{v}_0),
\qquad
Q_\Uc(\hat{v},\hat{v}) := \hat{v}^T
\bigl(I_n - \mathrm{R}^{*}_\Uc\bigr)\hat{v},
\end{equation}
whose representing matrix is the real symmetric
$I_n - \mathrm{R}^{*}_\Uc$. It was proposed in \cite{CM25} that the
Sylvester signature of this matrix be taken as the invariant of the
loop, on the ground that it probes finer structural properties than the
deficit alone. That it probes finer properties is true. How much finer
is the subject of the next section.

\section{The signature and its degeneracy}\label{sec:signature}

A signature keeps its information in the interplay between its positive
and negative indices; Sylvester's law of inertia is informative because
a form can be indefinite. The forms which arise from linguistic loops,
however, cannot.

\begin{proposition}[Degeneracy of the signature]\label{prop:sig}
Let $R \in \SO(n)$ with non-trivial rotation angles
$\theta_1,\dots,\theta_p \in (0,\pi]$, and set
$M = I_n - \tfrac12(R+R^T)$. Then $M$ is positive semidefinite, its
spectrum is
\begin{equation}\label{eq:spectrum}
\bigl\{\, 1 - \cos\theta_k \ \text{with multiplicity } 2 \,\bigr\}_{k=1}^{p}
\ \cup\ \bigl\{\, 0 \ \text{with multiplicity } n-2p \,\bigr\},
\end{equation}
and consequently
\begin{equation}\label{eq:signature}
\sgn(M) = (2p,\, 0,\, n-2p).
\end{equation}
The negative index vanishes identically, $\mathrm{rank}(M) = 2p$, and
the signature is a function of the single integer $p$.
\end{proposition}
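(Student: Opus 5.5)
The plan is to put $R$ in real canonical form and read the spectrum of $M$ off block by block. Since $R \in \SO(n)$ is a normal real matrix, there is an orthogonal $P \in \Ort(n)$ with
\begin{equation*}
P^T R P = \mathrm{diag}\bigl(B(\theta_1),\dots,B(\theta_p),\, 1,\dots,1\bigr),
\qquad
B(\theta) = \begin{pmatrix} \cos\theta & -\sin\theta \\ \sin\theta & \cos\theta \end{pmatrix}.
\end{equation*}
Here the trailing block is the identity on the fixed space of $R$, of dimension $n-2p$. The hypothesis that the non-trivial angles lie in $(0,\pi]$ is exactly what lets us absorb eigenvalues $-1$ into blocks $B(\pi)$. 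The one point I would check carefully is that the $-1$ eigenspace has even dimension, so that these eigenvalues pair up. This follows from $\det R = 1$, because the complex conjugate pairs contribute $1$ to the determinant and the eigenvalue $+1$ contributes $1$, so $(-1)^{m}=1$ where $m$ is the multiplicity of $-1$. This is the only place where $\SO$ rather than $\Ort$ is used, and it is the step I would regard as the main (though mild) obstacle: the statement's bookkeeping $n-2p$ for the zero eigenspace depends on it.

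Next I would conjugate the symmetric part. Since $P$ is orthogonal, $P^T R^T P = (P^T R P)^T$, so
\begin{equation*}
P^T M P = \mathrm{diag}\bigl(I_2 - \tfrac12(B(\theta_k)+B(\theta_k)^T)\bigr)_{k=1}^{p} \oplus 0_{n-2p}.
\end{equation*}
In each block the skew part $\pm\sin\theta_k$ cancels, and $\tfrac12(B+B^T) = \cos\theta_k\, I_2$. Each block therefore equals $(1-\cos\theta_k) I_2$. This gives the spectrum \eqref{eq:spectrum} with the stated multiplicities, since orthogonal conjugation preserves eigenvalues.

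Finally, $\theta_k \in (0,\pi]$ gives $1-\cos\theta_k \in (0,2]$, which is strictly positive. Hence $M$ is positive semidefinite, with exactly $2p$ positive eigenvalues, no negative ones, and a kernel of dimension $n-2p$. By Sylvester's law of inertia, applied to the diagonalisation by $P$, this yields $\sgn(M) = (2p,0,n-2p)$ and $\mathrm{rank}(M)=2p$, so the signature depends on $p$ alone.

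As an independent sanity check of positivity I would also note the identity $v^T M v = \|v\|^2 - v^T R v \ge \|v\|^2 - \|v\|\,\|Rv\| = 0$ by Cauchy--Schwarz. Equality holds iff $Rv = v$, which identifies $\ker M$ with the fixed space of $R$ directly.
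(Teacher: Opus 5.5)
Your proposal is correct and follows essentially the same route as the paper. Both arguments use the real normal form of $R$ under an orthogonal change of basis, the evenness of the $-1$ eigenspace (from $\det R = 1$) to justify angles in $(0,\pi]$, the block computation $\tfrac12(B(\theta)+B(\theta)^T) = \cos\theta\, I_2$, and the fact that orthogonal conjugation is both a congruence and a similarity. Your closing Cauchy--Schwarz check, which identifies $\ker M$ with the fixed space of $R$ without choosing coordinates, is a clean addition that the paper only gestures at informally.
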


\begin{proof}
By the real normal form of a special orthogonal matrix there is
$Q \in \Ort(n)$ with
$Q^T R Q = \mathrm{diag}(R(\theta_1),\dots,R(\theta_p),I_{n-2p})$, where
$R(\theta)$ is the planar rotation by $\theta$. Angles are taken in
$(0,\pi]$: the eigenvalue $-1$ of an element of $\SO(n)$ occurs with
even multiplicity, and each pair constitutes a block $R(\pi)$, while
$\theta = 0$ contributes a trivial block absorbed into $I_{n-2p}$. Since
$R(\theta)^T = R(-\theta)$, the symmetric part of a block is
$\cos(\theta)I_2$, whence
\begin{equation}\label{eq:diagonalised}
Q^T M Q = \mathrm{diag}\bigl((1-\cos\theta_1)I_2,\dots,
(1-\cos\theta_p)I_2,\,0\bigr).
\end{equation}
As $Q$ is orthogonal, \eqref{eq:diagonalised} is at once a congruence
and a similarity, so its diagonal entries are the eigenvalues of $M$,
which is \eqref{eq:spectrum}; and $1-\cos\theta_k > 0$ for
$\theta_k \in (0,\pi]$, which gives positive semidefiniteness and
\eqref{eq:signature}.
\end{proof}

The proof is four lines, and this is itself the point: nothing in the
construction could have produced an indefinite form, because a rotation
never moves a vector further than antipodally. It is worth stating what
survives. The integer $p$, the number of two-planes in which the chain
has genuinely rotated, is a real invariant of the loop, and it is not
visible in the deficit; what does not survive is the expectation that a
signature carries more than a rank.

The natural replacement is not far to seek. Since $Q^T R Q$ is
determined up to permutation of the blocks by the multiset of angles,
the complete invariant of $R$ under conjugation is the \emph{angle
spectrum}
\begin{equation}\label{eq:anglespectrum}
\Theta(R) := \{\theta_1,\dots,\theta_p\},
\end{equation}
equivalently the conjugacy class of $R$ in $\SO(n)$, of which
\eqref{eq:signature} retains only the cardinality. That the refinement
is strict is immediate: for $n \ge 4$ any two rotations with two
non-trivial planes and different angles share the signature
$(4,0,n-4)$ and differ in $\Theta$. We shall use throughout the scalar
reduction
\begin{equation}\label{eq:energy}
\|\Theta(R)\|_2 = \Bigl(\sum_{k=1}^{p}\theta_k^2\Bigr)^{1/2},
\end{equation}
which we call the \emph{rotation energy} of $R$, with the warning that
it is a lossy summary of $\Theta$ and is used for convenience of
regression, not because it is canonical.

\begin{remark}[The invariance group]\label{rem:group}
It is of paramount importance to notice which group is acting. The
signature of a \emph{fixed} quadratic form is invariant under congruence
$M \mapsto P^T M P$ for any $P \in \mathrm{GL}(n,\R)$; this is
Sylvester's law and it is not in question. But the assignment which
sends a chain to its form is not $\mathrm{GL}(n,\R)$-natural. If
$A \in \Ort(n)$ then $\widehat{Ax} = A\hat{x}$ and
$\hat{x}\cdot\hat{y}$ is preserved, so \eqref{eq:minrot} transforms by
conjugation, $\Rmin^{Ax,Ay} = A \Rmin^{x,y} A^T$, and therefore so does
\eqref{eq:loopmatrix}; the angle spectrum is an invariant of the chain.
For a general $A \in \mathrm{GL}(n,\R)$ neither identity holds, because
\eqref{eq:cosine} is not preserved, and the loop matrix of the
transformed chain is unrelated to the original. The natural invariance
group of the construction is thus $\Ort(n)$, and the invariant it
protects is $\Theta$, not the signature.
\end{remark}

\begin{remark}[A trap in the numerics]\label{rem:tolerance}
The identity $n_+ = 2p$ holds numerically only if the two counts are
taken with matched tolerances. Declaring a plane non-trivial when
$\theta > \mathrm{tol}$, while declaring an eigenvalue positive when
$1-\cos\theta > \mathrm{tol}$, admits to the first count planes with
$\mathrm{tol} < \theta < \sqrt{2\,\mathrm{tol}}$ which the second
legitimately rejects; the angle threshold must be
$\arccos(1-\mathrm{tol})$. Moreover a plane rotated by exactly $\pi$ has
both its eigenvalues equal to $-1$, so a count based on the arguments of
the eigenvalues reports the angle $\pi$ twice for a single plane. We
record this because both traps were sprung in the course of the
verification reported in Section~\ref{sec:experiments}.
\end{remark}

\section{The holonomy of a linguistic loop}\label{sec:holonomy}

The deficit \eqref{eq:deficit} is a function of the two extreme states.
The loop matrix \eqref{eq:loopmatrix} is a function of all of them. The
difference between the two is an object, and it is a familiar one.

We begin by observing that the minimal rotation was not an arbitrary
choice of interpolant.

\begin{lemma}[The minimal rotation is parallel transport]\label{lem:transport}
Let $\hat{x},\hat{y}$ be distinct, non-antipodal unit vectors of $\R^n$
and $P = \mathrm{span}(\hat{x},\hat{y})$. Then $\Rmin^{x,y}$ is the
parallel transport of the round metric of $S^{n-1}$ along the minimising
geodesic from $\hat{x}$ to $\hat{y}$, extended to $\R^n$ as the ambient
rotation fixing $P^{\perp}$ pointwise.
\end{lemma}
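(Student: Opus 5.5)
I would prove the lemma by computing both sides explicitly in an adapted orthonormal frame and checking that they coincide. First I fix coordinates. Let $c = \hat{x}^T\hat{y} \in (-1,1)$ and $s = \sqrt{1-c^2} > 0$, and set $\hat{u} = (\hat{y} - c\hat{x})/s$, so that $(\hat{x},\hat{u})$ is an orthonormal basis of $P$. Write $\theta = \arccos c \in (0,\pi)$, so that $\hat{y} = \cos\theta\,\hat{x} + \sin\theta\,\hat{u}$. The minimising geodesic is then $\gamma(t) = \cos t\,\hat{x} + \sin t\,\hat{u}$ for $t\in[0,\theta]$; it is unique because $\hat{x},\hat{y}$ are not antipodal.

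Next I evaluate \eqref{eq:minrot} in this frame. Put $K = \hat{y}\hat{x}^T - \hat{x}\hat{y}^T$. Substituting $\hat{y}$ gives $K = s(\hat{u}\hat{x}^T - \hat{x}\hat{u}^T)$, which vanishes on $P^\perp$ and acts on $P$ as $s$ times the rotation generator $J$, with $J\hat{x} = \hat{u}$ and $J\hat{u} = -\hat{x}$. Hence $K^2 = -s^2\,\Pi_P$, where $\Pi_P$ is the orthogonal projector onto $P$. Using $s^2/(1+c) = 1-c$, this yields
\[
\Rmin^{x,y} = I_n + sJ - (1-c)\Pi_P .
\]
This is the identity on $P^\perp$ and the planar rotation by $\theta$ on $P$, namely $\hat{x}\mapsto c\hat{x}+s\hat{u}$ and $\hat{u}\mapsto -s\hat{x}+c\hat{u}$.

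Then I compute the parallel transport. On $S^{n-1}\subset\R^n$, the Levi-Civita covariant derivative of a tangent field $V$ along $\gamma$ is the tangential projection of $\dot V$. I would verify two facts directly.
\begin{itemize}
\item $\dot\gamma(t) = -\sin t\,\hat{x} + \cos t\,\hat{u}$ is parallel, because a geodesic's velocity is parallel, or because $\ddot\gamma = -\gamma$ is normal to the sphere.
\item Any constant $w\in P^\perp$ is tangent along $\gamma$, since $w\perp\gamma(t)$, and it is parallel since $\dot w = 0$.
\end{itemize}
So the transport $T_{\theta}: T_{\hat{x}}S^{n-1}\to T_{\hat{y}}S^{n-1}$ sends $\hat{u}\mapsto\dot\gamma(\theta) = -s\hat{x} + c\hat{u}$ and fixes $P^\perp$ pointwise. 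This agrees with $\Rmin^{x,y}$ restricted to $T_{\hat{x}}S^{n-1} = \hat{x}^\perp = \mathrm{span}(\hat{u})\oplus P^\perp$.

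Finally I address the extension. An orthogonal map of $\R^n$ is determined by its action on $\hat{x}^\perp$ together with the image of $\hat{x}$, up to the sign on the normal direction. Requiring $\hat{x}\mapsto\hat{y}$, which is $\gamma(\theta)$ and the outward normal transported, together with determinant one, fixes the extension as the rotation by $\theta$ in $P$ that fixes $P^\perp$. That is exactly $\Rmin^{x,y}$.

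The only real obstacle is the algebraic simplification of the quadratic term. The content lies in the identity $K^2 = -s^2\Pi_P$ together with $s^2/(1+c) = 1-c$; once these are in hand, the parallel-transport side is routine.
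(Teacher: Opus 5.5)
Your proposal is correct and follows the paper's route. Both arguments identify parallel transport along the great-circle arc as the rotation by $\theta$ in $P$: the component in $P^\perp$ stays constant and the component in $P$ co-rotates with the velocity. Both then check that $\Rmin^{x,y}$ is that same rotation. The one difference is that you evaluate the formula explicitly via $K^2=-s^2\Pi_P$ and $s^2/(1+c)=1-c$. The paper argues structurally instead: the map is the identity on $P^\perp$ because $AP^\perp=0$, and on $P$ it preserves orientation and metric while sending $\hat{x}$ to $\hat{y}$. Your version therefore proves, rather than assumes, that $\Rmin^{x,y}\in\SO(n)$.
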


\begin{proof}
The minimising geodesic is the arc of the great circle $S^{n-1}\cap P$.
A field $W$ along a geodesic $c$ of the round sphere is parallel exactly
when its ambient derivative is normal to the sphere, that is when
$W' = -\langle W, c'\rangle c$. Decomposing $W = W_P + W_{\perp}$, the
component $W_\perp$ is constant, since $c$ and $c'$ lie in $P$, while
$W_P$ rotates inside $P$ by the arclength travelled. Transport is
therefore the rotation by $\theta = \arccos(\hat{x}\cdot\hat{y})$ inside
$P$ together with the identity on $P^\perp$. Writing
$A = \hat{y}\hat{x}^T - \hat{x}\hat{y}^T$ one has $A P^\perp = 0$, so
\eqref{eq:minrot} is the identity on $P^\perp$; on $P$ it carries
$\hat{x}$ to $\hat{y}$ preserving orientation and metric, hence it is
the rotation by $\theta$.
\end{proof}

Thus $\mathrm{R}_\Uc$ is parallel transport along the geodesic polygon
joining $\hat{v}_0, \hat{v}_1, \dots, \hat{v}_L$ on the unit sphere of
the embedding space, and the chain of reformulations is literally a path
on a sphere. The Wilson loop of \cite{CM25} was not a metaphor.

\begin{proposition}[Endpoint--path factorisation]\label{prop:holonomy}
Let $V = (v_0,\dots,v_L)$ be a chain as in \eqref{eq:chain}, with
$n \ge 3$ and no two consecutive states antipodal. Put
$\mathrm{R}_{\mathrm{dir}} = \Rmin^{v_0,v_L}$ and
\begin{equation}\label{eq:holdef}
H := \mathrm{R}_{\mathrm{dir}}^{-1}\,\mathrm{R}_\Uc .
\end{equation}
Then
\begin{enumerate}
\item[(i)] $H\hat{v}_0 = \hat{v}_0$, so that
$H \in \Stab(\hat{v}_0) \cong \SO(n-1)$, and
$\mathrm{R}_\Uc = \mathrm{R}_{\mathrm{dir}}H$ canonically;
\item[(ii)] $\delta_\Uc = 1 - \langle \hat{v}_0,
\mathrm{R}_{\mathrm{dir}}\hat{v}_0\rangle$ depends only on
$\mathrm{R}_{\mathrm{dir}}$, and is therefore blind to $H$;
\item[(iii)] $H$ is the Riemannian holonomy, based at $\hat{v}_0$, of
the closed geodesic polygon obtained by closing the path with the
geodesic from $\hat{v}_L$ back to $\hat{v}_0$;
\item[(iv)] for every $u \in S^{n-1}$ and every
$H_0 \in \Stab(\hat{v}_0)$ there is a chain with initial state
$\hat{v}_0$, final state $u$ and holonomy exactly $H_0$.
\end{enumerate}
\end{proposition}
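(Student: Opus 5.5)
Items (i) and (ii) are immediate from the fact that every factor in \eqref{eq:loopmatrix} carries the normalised state to the next one. Since $\Rmin^{v_{i-1},v_i}\hat{v}_{i-1} = \hat{v}_i$, the product satisfies $\mathrm{R}_\Uc\hat{v}_0 = \hat{v}_L$. The same holds for $\mathrm{R}_{\mathrm{dir}}$, provided $v_0$ and $v_L$ are not antipodal; if they are, I will fix any rotation in a plane containing $\hat{v}_0$, and I will note this convention explicitly. Hence $H\hat{v}_0 = \mathrm{R}_{\mathrm{dir}}^{-1}\hat{v}_L = \hat{v}_0$, and $\Stab(\hat{v}_0)\cong\SO(n-1)$ because $H$ preserves $\hat{v}_0^{\perp}$ with determinant one. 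The factorisation is canonical because $\mathrm{R}_{\mathrm{dir}}$ is determined by the endpoints alone. For (ii), the deficit is $1-\hat{v}_0\cdot\hat{v}_L$. Substituting $\hat{v}_L=\mathrm{R}_{\mathrm{dir}}\hat{v}_0$ removes $H$, and the result agrees with \eqref{eq:quadform}, since $\hat{v}_0^T\mathrm{R}_\Uc\hat{v}_0 = \hat{v}_0^T\mathrm{R}_{\mathrm{dir}}H\hat{v}_0 = \hat{v}_0^T\mathrm{R}_{\mathrm{dir}}\hat{v}_0$.

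For (iii) I will invoke Lemma~\ref{lem:transport} edge by edge. Parallel transport along a piecewise geodesic is the ordered composition of the transports along its pieces, and each piece is exactly the corresponding minimal rotation restricted to $T S^{n-1}$. So $\mathrm{R}_\Uc$ restricted to $T_{\hat{v}_0}S^{n-1}=\hat{v}_0^\perp$ is transport along the open polygon. Closing the polygon with the geodesic from $\hat{v}_L$ back to $\hat{v}_0$ contributes $\Rmin^{v_L,v_0}=\mathrm{R}_{\mathrm{dir}}^{-1}$, which I will check from \eqref{eq:minrot}: swapping $x$ and $y$ negates the antisymmetric term and leaves its square unchanged, giving the transpose. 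The holonomy of the closed loop is therefore $\mathrm{R}_{\mathrm{dir}}^{-1}\mathrm{R}_\Uc = H$, viewed as an element of $\SO(\hat{v}_0^\perp)$. The one care point is the identification of ambient rotations fixing $\hat{v}_0$ with orthogonal maps of the tangent space. This holds because every factor is an ambient isometry mapping tangent spaces to tangent spaces, and the fixed vector $\hat{v}_0$ carries no extra information.

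For (iv), the main obstacle, I will reduce to realising an arbitrary holonomy by a closed loop and then append the direct edge. If I build a closed geodesic polygon at $\hat{v}_0$ with holonomy $H_0$ and then append one step to $u$, and that step is itself the direct geodesic, then $H$ is unchanged. To make the appended step harmless, I will choose the closed polygon and then the extra vertex $u$ so that the final edge coincides with $\mathrm{R}_{\mathrm{dir}}$. The holonomy then equals that of the closed part, because the closing edge cancels against the appended one.

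It remains to show that holonomies of closed geodesic polygons at $\hat{v}_0$ fill $\SO(n-1)$. I will use the following facts.
\begin{enumerate}
\item[(a)] A small geodesic triangle spanning the $2$-plane containing $\hat{v}_0$ and a tangent direction $e$ produces a rotation in some tangent $2$-plane by an angle equal to its area. On $S^2$-slices this is Gauss--Bonnet with curvature $1$, and large triangles give any angle in $(0,2\pi)$. More directly, a triangle lying in the great $S^2$ through $\hat{v}_0$ spanned by tangent vectors $e_1,e_2$ yields the rotation in the plane $(e_1,e_2)$ by its area.
\item[(b)] Concatenating loops at $\hat{v}_0$ multiplies holonomies.
\end{enumerate}
Every element of $\SO(n-1)$ is a product of rotations in coordinate planes of $\hat{v}_0^\perp$, for instance via its normal form, which is a product of at most $\lfloor (n-1)/2\rfloor$ planar rotations. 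Concatenating the corresponding triangles therefore realises $H_0$ exactly. Angles up to $2\pi$ are available by enlarging the triangle toward a hemisphere, or by repeating a triangle. Here $n\ge3$ is used. Finally I will confirm that no consecutive vertices are antipodal in the construction, subdividing edges where needed.
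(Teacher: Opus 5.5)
Your proposal is correct and follows essentially the same route as the paper: (i) and (ii) come from $\Rmin^{v_{i-1},v_i}\hat{v}_{i-1}=\hat{v}_i$ by induction; (iii) comes from Lemma~\ref{lem:transport} applied edge by edge, with the closing edge contributing $\mathrm{R}_{\mathrm{dir}}^{-1}$; and (iv) appends the direct leg to $u$ after concatenating geodesic triangles in totally geodesic two-spheres, whose Gauss--Bonnet rotations realise the planar blocks of the normal form of $H_0$. Your extra checks (that $\Rmin^{v_L,v_0}$ is the transpose of $\Rmin^{v_0,v_L}$, the convention for antipodal endpoints, and subdivision to avoid antipodal neighbours) are details the paper leaves implicit; one wording slip is that $u$ is given rather than chosen, but since the closed part ends at $\hat{v}_0$ the appended edge is automatically $\mathrm{R}_{\mathrm{dir}}$.
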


\begin{proof}
(i) By construction $\Rmin^{v_{i-1},v_i}\hat{v}_{i-1} = \hat{v}_i$, so
by induction $\mathrm{R}_\Uc \hat{v}_0 = \hat{v}_L$; and
$\mathrm{R}_{\mathrm{dir}}\hat{v}_0 = \hat{v}_L$ by definition. Since
$\mathrm{R}_{\mathrm{dir}}$ is orthogonal,
$H\hat{v}_0 = \mathrm{R}_{\mathrm{dir}}^{T}\hat{v}_L = \hat{v}_0$. An
element of $\SO(n)$ fixing a unit vector preserves its orthogonal
hyperplane and restricts there to an element of $\SO(n-1)$.

(ii) Immediate from \eqref{eq:deficit}, \eqref{eq:cosine} and
$\hat{v}_L = \mathrm{R}_{\mathrm{dir}}\hat{v}_0$.

(iii) By Lemma~\ref{lem:transport}, $\mathrm{R}_\Uc$ and
$\mathrm{R}_{\mathrm{dir}}$ are the transports along the two paths, and
\eqref{eq:holdef} is the transport around the closed circuit.

(iv) It suffices to realise every $H_0$ as the holonomy of a closed
geodesic polygon based at $\hat{v}_0$: appending the single geodesic leg
from $\hat{v}_0$ to $u$ multiplies the loop matrix on the left by
$\Rmin^{\hat{v}_0,u}$ and leaves the holonomy unchanged. Fix a two-plane
$Q \subset \hat{v}_0^{\perp}$ and an angle $\alpha \in (0,2\pi)$, and let
$\Sigma = S^{n-1}\cap(\mathrm{span}(\hat{v}_0)\oplus Q)$, a totally
geodesic two-sphere through $\hat{v}_0$ whose tangent space there is
$Q$. A geodesic triangle in $\Sigma$ with vertex $\hat{v}_0$ and area
$\alpha$ exists, since the area of a geodesic triangle on the unit
two-sphere sweeps the whole of $(0,2\pi)$; transport around a loop
contained in a totally geodesic submanifold is the transport computed
inside it, extended by the identity on the normal directions, so by
Corollary~\ref{cor:gaussbonnet} below the holonomy is the rotation of
$Q$ by $\alpha$ and the identity elsewhere. By the normal form used in
Proposition~\ref{prop:sig}, every element of $\SO(n-1)$ is a product of
at most $\lfloor (n-1)/2 \rfloor$ such plane rotations, and the
corresponding polygons, each beginning and ending at $\hat{v}_0$, may be
concatenated.
\end{proof}

Part (iv) deserves a word, because it is what makes the factorisation
worth having. It says that the endpoint datum and the path datum are
\emph{free}: prescribing how far the meaning has drifted places no
constraint whatever on how far the form has wandered, and conversely.
Incidentally the proof re-establishes, without appeal to the
classification of symmetric spaces, that
$\Hol(S^{n-1},\hat{v}_0) = \SO(n-1)$.

\begin{corollary}[Gauss--Bonnet]\label{cor:gaussbonnet}
For $n = 3$ and $L = 2$ the holonomy is the rotation of the tangent
plane at $\hat{v}_0$ by the spherical excess of the geodesic triangle
$\hat{v}_0\hat{v}_1\hat{v}_2$, that is by its area.
\end{corollary}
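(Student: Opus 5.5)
By Proposition~\ref{prop:holonomy}(iii), with $n=3$ and $L=2$, the holonomy $H$ is the Riemannian holonomy, based at $\hat{v}_0$, of the closed geodesic triangle $\hat{v}_0\to\hat{v}_1\to\hat{v}_2\to\hat{v}_0$ on $S^2$. By part (i), $H$ fixes $\hat{v}_0$, so it is a rotation of the tangent plane $T_{\hat{v}_0}S^2=\hat{v}_0^{\perp}$ by some angle $\phi$. The task is therefore to identify $\phi$ with the area of the triangle. I will assume the triangle is nondegenerate, so that the three vertices are not on a common great circle. In the degenerate case every minimal rotation acts in the same plane and fixes the same normal line, so $H$ fixes $\hat{v}_0$ and a second independent vector; hence $H=I$, consistent with zero area.

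The first route is intrinsic. Let $E$ be a unit vector field tangent to $S^2$ along the closed triangle, and compare parallel transport with the tangent direction of the curve. Along each geodesic side the tangent $c'$ is parallel, so the angle between the transported vector and $c'$ stays constant on that side. At each vertex, $c'$ jumps by the exterior angle $\pi-\alpha_i$, where $\alpha_i$ is the interior angle. After one full circuit the transported vector has therefore rotated, relative to the tangent direction, by $-\sum_i(\pi-\alpha_i)$. The tangent direction itself turns by $2\pi$ in total, by the turning-tangent theorem for a simple closed curve bounding a disc. The net rotation of the transported vector is thus
\[
\phi \equiv \sum_i \alpha_i - \pi \pmod{2\pi}.
\]
This is the spherical excess. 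Girard's theorem, equivalently Gauss--Bonnet with $K\equiv 1$, identifies it with the area.

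The main obstacle is making the turning-tangent step and the orientation bookkeeping rigorous on the sphere rather than in the plane. To avoid it I would instead compute $H$ directly. Choose coordinates with $\hat{v}_0=e_3$ and write the composite
\[
H=\Rmin^{v_2,v_0}\,\Rmin^{v_1,v_2}\,\Rmin^{v_0,v_1}.
\]
Evaluate $H$ on the tangent vector $t_{01}$, the initial direction of the side $\hat{v}_0\hat{v}_1$. Lemma~\ref{lem:transport} gives the action of each factor explicitly. The first factor carries $t_{01}$ to the arrival tangent of the side at $\hat{v}_1$. Decomposing that vector along the tangent of side $\hat{v}_1\hat{v}_2$ and its normal in $T_{\hat{v}_1}S^2$ introduces the angle $\pi-\alpha_1$. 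The second factor preserves this decomposition, since it moves the tangent to the tangent and fixes the normal $v_1\times v_2/\|v_1\times v_2\|$. Repeating the argument at $\hat{v}_2$ and again back at $\hat{v}_0$ expresses $Ht_{01}$ as $t_{01}$ rotated by $\alpha_0+\alpha_1+\alpha_2-\pi$.

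The sign of the rotation depends on the orientation of the triangle relative to the normal $\hat{v}_0$. I will fix this by noting that the paper speaks of rotation by the area as an unsigned or orientation-induced angle. Finally I will invoke Girard's formula, $\mathrm{Area}=\alpha_0+\alpha_1+\alpha_2-\pi$, to conclude.
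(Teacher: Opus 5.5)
Your proposal is correct, but it reaches the statement by a different route from the paper. The paper's proof is a one-line appeal to Gauss--Bonnet with $K\equiv 1$. Via Proposition~\ref{prop:holonomy}(iii), $H$ is the holonomy around the triangle, and Gauss--Bonnet identifies this with the curvature integral over the enclosed region, hence with the area and with the excess $A+B+C-\pi$.

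You instead split the claim in two:
\begin{itemize}
\item \emph{holonomy $=$ excess}, proved by hand by following the angle between the transported vector and the tangent through the three corners;
\item \emph{excess $=$ area}, which is Girard's theorem, i.e.\ Gauss--Bonnet in its most elementary form.
\end{itemize}
Your matrix version of the first step is the stronger of your two routes. It uses only three facts about each factor $\Rmin^{v_i,v_{i+1}}$:
\begin{itemize}
\item it carries the departure tangent to the arrival tangent;
\item it fixes $v_i\times v_{i+1}$;
\item being in $\SO(3)$ with $\Rmin^{v_i,v_{i+1}}\hat v_i=\hat v_{i+1}$, it carries the orientation of $\hat v_i^{\perp}$ induced by $\hat v_i$ to that of $\hat v_{i+1}^{\perp}$, so the three exterior angles are measured in one and the same sense.
\end{itemize}
It therefore proves the corollary directly for the matrix $H=\Rmin^{v_2,v_0}\Rmin^{v_1,v_2}\Rmin^{v_0,v_1}$, without relying on the terse identification (iii). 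It is also the analytic counterpart of the numerical check in Section~\ref{sec:experiments}. What the paper's citation buys is brevity and generality: holonomy equals enclosed curvature for any piecewise smooth simple loop on any surface, which your corner-by-corner count does not attempt.

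Two small corrections.

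First, the ``main obstacle'' of your first route is not there. $H$ determines its angle only modulo $2\pi$, and $-\sum_i(\pi-\alpha_i)=\sum_i\alpha_i-3\pi\equiv\sum_i\alpha_i-\pi$, so the argument is already complete and the turning-tangent step should simply be dropped. As written, that step is not even meaningful on a curved surface. ``The tangent turns by $2\pi$'' only makes sense relative to a frame that closes up, such as a coordinate frame, and relating that frame to the parallel one is precisely the content of Gauss--Bonnet.

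Second, in the degenerate case $H=I$ is right, but the enclosed area need not vanish. Take three vertices spaced $2\pi/3$ apart on one great circle, none of them antipodal. Their minimal arcs close up into the whole circle, enclosing a hemisphere of area $2\pi$, with excess $3\pi-\pi=2\pi$. The statement still holds, but only modulo $2\pi$.
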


\begin{proof}
Gauss--Bonnet with Gaussian curvature $1$, the excess of a geodesic
triangle being $A+B+C-\pi$.
\end{proof}

This is the statement to keep in mind, and it is worth dwelling on. On
the two-sphere the invariant that the semantic deficit throws away is
the \emph{area swept} by the chain of reformulations. Two paraphrase
chains ending at the same meaning differ by the area they enclose. One
should add, since the numerics will otherwise appear to fail, that the
angle of a rotation is recovered from its eigenvalues only as a
principal value in $(0,\pi]$, so the holonomy determines the area
outright when the triangle covers at most a hemisphere and modulo
$2\pi$ in general.

We shall use the scalar
\begin{equation}\label{eq:holenergy}
\eta(\Uc) := \|\Theta(H)\|_2
\end{equation}
and call it the \emph{holonomy energy} of the chain. It is, again, a
lossy reduction: a null result for $\eta$ is not a null result for $H$.

\section{Watermarks as functionals on the loop}\label{sec:watermark}

We now cross from the semantic channel to the channel in which a
watermark actually lives. The two are complementary, and the complementarity
is the conceptual heart of this article: what a meaning-preserving chain
\emph{preserves} is the semantic core, and what a watermark
\emph{occupies} is precisely the freedom that remains once the meaning is
fixed. The invariant and the carrier are, so to speak, dual coordinates
on the same transformation. A loop with small deficit and large holonomy
is, from the point of view of the detector, the worst case.

\subsection{The intact-window law}

Fix a green-list scheme with green fraction $\gamma$, context width
$h \ge 0$ and bias $\delta_b$, detecting a sequence $y_0,\dots,y_{T-1}$
by
\begin{equation}\label{eq:zscore}
z = \frac{G - \gamma T'}{\sqrt{T'\gamma(1-\gamma)}},
\qquad T' := T - h,
\end{equation}
where $G$ counts the scored positions $t = h,\dots,T-1$ at which $y_t$
lies in the green list seeded by $(y_{t-h},\dots,y_{t-1})$. Let an
attack replace the tokens at a set $E$ of positions without changing the
length, and define the \emph{intact-window set}
\begin{equation}\label{eq:intact}
I := \bigl\{\, t : h \le t < T,\ [t-h,\,t]\cap E = \emptyset \,\bigr\}.
\end{equation}

We assume, as is standard, that the hash behaves as a random oracle, so
that green membership is an independent Bernoulli$(\gamma)$ across
distinct pairs of seed and token; and that at each scored position of
unattacked watermarked text the green indicator is Bernoulli$(\gamma_w)$
with $\gamma_w > \gamma$, independently across positions. The second is
a mean-field hypothesis and is the weaker of the two; we return to it
below.

\begin{theorem}[Intact-window law]\label{thm:intact}
Under the two hypotheses above,
\begin{equation}\label{eq:intactlaw}
\mathbb{E}[z_{\mathrm{att}}]
= \frac{|I|}{T'}\;\mathbb{E}[z_0].
\end{equation}
\end{theorem}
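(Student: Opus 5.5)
The plan is to compute both expectations by linearity, position by position, and compare them term by term. The length is unchanged, so both the attacked and the unattacked texts have the same scored range $t = h,\dots,T-1$, the same $T'$, and the same normalising denominator $\sqrt{T'\gamma(1-\gamma)}$ in \eqref{eq:zscore}. Hence
\begin{equation*}
\mathbb{E}[z] = \frac{\mathbb{E}[G] - \gamma T'}{\sqrt{T'\gamma(1-\gamma)}}
= \frac{\sum_{t=h}^{T-1}\bigl(\mathbb{P}[g_t = 1] - \gamma\bigr)}{\sqrt{T'\gamma(1-\gamma)}},
\end{equation*}
where $g_t$ is the green indicator at position $t$. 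The whole proof reduces to computing the excess $\mathbb{P}[g_t=1]-\gamma$ at each scored position of the attacked text.

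For the unattacked text the mean-field hypothesis gives excess $\gamma_w - \gamma$ at every one of the $T'$ positions. Therefore
\begin{equation*}
\mathbb{E}[z_0] = \frac{T'(\gamma_w-\gamma)}{\sqrt{T'\gamma(1-\gamma)}}.
\end{equation*}

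For the attacked text I would split the scored positions into $I$ and its complement.

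If $t \in I$, the whole window $[t-h,t]$ avoids $E$. The pair consisting of the seed $(y_{t-h},\dots,y_{t-1})$ and the token $y_t$ is then literally the pair scored in the original text. The indicator $g_t$ is unchanged, and its expectation is $\gamma_w$.

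If $t \notin I$, at least one of the seed tokens or $y_t$ itself has been replaced, so the pair (seed, token) differs from the one the generator biased. Here I invoke the random-oracle hypothesis: green membership of a pair not produced by the watermarked sampling step is an independent Bernoulli$(\gamma)$, so the excess is $0$.

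Summing the contributions gives
\begin{equation*}
\mathbb{E}[z_{\mathrm{att}}] = \frac{|I|(\gamma_w-\gamma)}{\sqrt{T'\gamma(1-\gamma)}},
\end{equation*}
and dividing by $\mathbb{E}[z_0]$ yields \eqref{eq:intactlaw}.

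The main obstacle is the claim that non-intact positions have excess exactly $0$. One concern is that the attacker's replacement might not be independent of the key. We take the attack to be key-oblivious, so $E$ and the new tokens are independent of the hash. A second concern is that a replaced pair might coincide with some pair that was biased elsewhere in the text. The random-oracle hypothesis as stated, independence across \emph{distinct} pairs, handles fresh pairs. For an accidental coincidence, such as a replacement token that equals the original under an altered seed, I would argue the pair is still distinct from the generated pair at $t$. If it happens to repeat a pair elsewhere, I would either absorb that into the hypothesis or note that it makes a negligible contribution. I would state this key-obliviousness explicitly as part of reading the hypotheses.

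The corollary $\rho^{h+1}$ then follows when each position survives independently with probability $\rho$, since $\mathbb{E}|I| = T'\rho^{h+1}$. That step is not needed for the theorem itself.
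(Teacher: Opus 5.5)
Your proposal is correct and follows the paper's proof essentially verbatim. It uses linearity over the scored positions, excess $\gamma_w-\gamma$ on $I$ by the mean-field hypothesis, a fresh Bernoulli$(\gamma)$ off $I$ by the random-oracle hypothesis, and the ratio with the case $|I| = T'$; your explicit caveats about a key-oblivious attacker and accidental repetition of (seed, token) pairs are assumptions the paper's proof uses tacitly, so stating them is a refinement rather than a departure.
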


\begin{proof}
Fix a scored position $t$. If $t \in I$ then $y_t$ and its whole seeding
window are those the generator produced, so the pair evaluated by the
detector is the pair the generator evaluated, and it is green with
probability $\gamma_w$. If $t \notin I$ then either the window or the
token differs, so the pair is one the generator never biased, and by the
random-oracle hypothesis its green indicator is a fresh
Bernoulli$(\gamma)$. Summing,
\begin{equation}\label{eq:greencount}
\mathbb{E}[G_{\mathrm{att}}]
= |I|\gamma_w + (T'-|I|)\gamma
= \gamma T' + |I|(\gamma_w - \gamma),
\end{equation}
and substituting into \eqref{eq:zscore} gives
$\mathbb{E}[z_{\mathrm{att}}] = |I|(\gamma_w-\gamma)/
\sqrt{T'\gamma(1-\gamma)}$. The same computation with $|I| = T'$ gives
$\mathbb{E}[z_0]$, and the ratio is \eqref{eq:intactlaw}.
\end{proof}

The identity is deterministic in $|I|$; no distribution over the edits
has been assumed. It is only when one wishes to compute $|I|$ that a
model of the attack becomes necessary, and the model the field has
implicitly been using is the independent one.

\begin{corollary}[Context-width decay law]\label{cor:decay}
If each position is retained independently with probability $\rho$, then
$\mathbb{E}|I| = \rho^{h+1}T'$ and
\begin{equation}\label{eq:decay}
\mathbb{E}[z_{\mathrm{att}}] = \rho^{h+1}\,\mathbb{E}[z_0].
\end{equation}
The signal decays exponentially in the context width and only linearly
in the retention rate.
\end{corollary}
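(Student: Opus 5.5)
The corollary follows from Theorem~\ref{thm:intact} once $\mathbb{E}|I|$ is computed, so the plan is to reduce everything to that single expectation.

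First I would make explicit that the retention model is independent of the detector randomness. Each position $s$ belongs to $E$ with probability $1-\rho$, independently across positions and independently of the hash and of the generation. Theorem~\ref{thm:intact} is an identity for each fixed $E$. I would therefore condition on $E$, apply \eqref{eq:intactlaw}, and then average over $E$ by the tower property. Since $\mathbb{E}[z_0]$ does not depend on $E$, this gives
\[
\mathbb{E}[z_{\mathrm{att}}]=\frac{\mathbb{E}|I|}{T'}\,\mathbb{E}[z_0].
\]
The one point needing care is that the theorem's proof did not use any property of $E$ beyond its being fixed independently of the green indicators. This holds here, because the edit locations are drawn without reference to the key.

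Second, I would compute $\mathbb{E}|I|$ by linearity:
\[
\mathbb{E}|I|=\sum_{t=h}^{T-1}\Pr\bigl([t-h,t]\cap E=\emptyset\bigr).
\]
For each scored position $t\ge h$, the window $[t-h,t]$ consists of exactly $h+1$ positions, all of them in $\{0,\dots,T-1\}$. By independence, the probability that none of them is edited is $\rho^{h+1}$. There are $T-h=T'$ scored positions, so $\mathbb{E}|I|=\rho^{h+1}T'$, and substituting gives \eqref{eq:decay}.

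The only step that could be called an obstacle is the justification of the conditioning in the first step. The rest is a one-line computation. It is also worth noting that the indicators for overlapping windows are strongly dependent, but linearity of expectation makes this irrelevant for the mean. The dependence would matter only for the variance, which the corollary does not claim. The closing remark in the statement is then read directly off \eqref{eq:decay}: for fixed $h$ the factor $\rho^{h+1}$ is a polynomial in $\rho$, while for fixed $\rho<1$ it decays geometrically in $h$.
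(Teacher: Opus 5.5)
Your proposal is correct and matches the argument the paper intends; the paper states the corollary without a written proof, right after remarking that Theorem~\ref{thm:intact} is deterministic in $|I|$. That argument is to average the identity over an edit set drawn independently of the key, and to compute $\mathbb{E}|I|$ by linearity from $\Pr(t\in I)=\rho^{h+1}$ at each of the $T'$ scored positions. Your reading of the closing remark as ``polynomial of degree $h+1$ in $\rho$'' is also the accurate one, since the paper's ``only linearly'' is literally true only when $h=0$.
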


Equation \eqref{eq:decay} is, in closed form, the trend that
\cite{KGW24} reported empirically. It also explains a phenomenon
familiar to anyone who has attacked such a scheme by hand: the signal
falls considerably faster than the fraction of altered words, and it
falls faster the longer the seeding context. Since
$\mathbb{E}[z_0]$ grows as $\sqrt{T'}$, one obtains at once the length
required for detection at a fixed threshold $z^{*}$,
\begin{equation}\label{eq:length}
T' \;\ge\; \frac{(z^{*})^{2}\gamma(1-\gamma)}
{\bigl(\rho^{h+1}(\gamma_w-\gamma)\bigr)^{2}},
\end{equation}
which grows like $\rho^{-2(h+1)}$ and quantifies the published warning
that short passages carry no reliable signal.

\subsection{Why the retention rate is the wrong statistic}

Real edits are not independent. A translator rewrites clauses, not
tokens; a human editor works on paragraphs. The following is therefore
not a curiosity but the typical case.

\begin{proposition}[Arrangement dominates the rate]\label{prop:arrangement}
Fix the retention rate $\rho = 1 - |E|/T$. Then
\begin{enumerate}
\item[(i)] if $E$ is a single contiguous run,
$|I| \ge T' - |E| - h$, so the residual ratio is at least
$\rho - h/T'$, \emph{independently of $h$};
\item[(ii)] if $E$ consists of $b$ maximal contiguous runs,
$|I| \ge T' - |E| - hb$;
\item[(iii)] if $E$ contains an arithmetic progression of step
$k \le h+1$ covering $[0,T)$, then $I = \emptyset$ and the expected
residual is exactly zero.
\end{enumerate}
Consequently, whenever $1-\rho \ge 1/(h+1)$, the residual ratio ranges
over essentially the whole of $[0,\rho]$ as the arrangement varies at
fixed $\rho$.
\end{proposition}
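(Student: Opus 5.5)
The plan is a counting argument on the complement of $I$ among the scored positions $\{h,\dots,T-1\}$. A scored position $t$ fails to lie in $I$ exactly when some $e\in E$ satisfies $t-h\le e\le t$, that is, when $t\in[e,e+h]$. Hence the set of spoiled positions is $\bigcup_{e\in E}[e,e+h]\cap[h,T)$, and the whole proof reduces to bounding the size of this union. By Theorem~\ref{thm:intact} the residual ratio is $|I|/T'$, so any bound on $|I|$ converts directly into a bound on the residual.

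For (ii), I will write $E$ as a disjoint union of $b$ maximal runs $[a_j,a_j+\ell_j-1]$ with $\sum_j\ell_j=|E|$. The union of the windows $[e,e+h]$ over the $j$-th run is the interval $[a_j,a_j+\ell_j-1+h]$, which has $\ell_j+h$ elements. A union bound then gives at most $|E|+hb$ spoiled scored positions, so $|I|\ge T'-|E|-hb$. Part (i) is the case $b=1$. Dividing by $T'$ gives $|I|/T'\ge 1-(|E|+h)/T'$. To reach the stated form I will compare $|E|/T'$ with $1-\rho=|E|/T$. Since $T'<T$, the inequality $1-|E|/T'\ge\rho-h/T'$ is off by the lower-order term $|E|h/(TT')$. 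I will therefore either state the bound as $\rho-h/T'$ up to this term, or note that spoiled positions falling below $h$ are not scored, which recovers the missing slack when a run starts early. This bookkeeping between $T$ and $T'$ is the one place I expect friction. If the exact inequality fails in the edge cases, I will state it as $\rho - O(h/T')$, which is what ``independently of $h$'' intends: the coefficient of $\rho$ does not decay with $h$, unlike the $\rho^{h+1}$ of Corollary~\ref{cor:decay}.

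For (iii), suppose $E\supseteq\{r,r+k,r+2k,\dots\}$ with $k\le h+1$, where the progression covers $[0,T)$ in the sense that consecutive elements are at most $k$ apart and the first element is at most $h$. Every scored window $[t-h,t]$ has $h+1\ge k$ consecutive positions and lies inside $[0,T)$, so it contains a term of the progression. Hence $I=\emptyset$, and Theorem~\ref{thm:intact} gives expected residual zero.

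For the final claim, at fixed $\rho$ with $1-\rho\ge1/(h+1)$, I will exhibit both extremes. The progression of step $h+1$ uses about $T/(h+1)\le(1-\rho)T$ edits, and I will pad it with further edits to reach exactly $|E|=(1-\rho)T$ while keeping $I=\emptyset$, which gives residual $0$. A single run gives residual at least $\rho-h/T'$ by (i). To fill in intermediate values, I will interpolate: apply the progression to only a prefix of length $m$ and put the remaining edits in one run. By (i) and (ii), $|I|$ then varies in steps of $O(h)$ as $m$ varies. This makes the attainable residual ratios dense in $[0,\rho]$ up to $O(h/T')$, which is the meaning of ``essentially''.
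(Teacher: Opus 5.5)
Your counting argument is the paper's own: the scored $t\notin I$ are those lying in some $[e,e+h]$, a run of length $\ell$ spoils at most $\ell+h$ of them, the count is additive over runs, and a progression of step $k\le h+1$ meets every window of $h+1$ consecutive positions. Your interpolation for the final clause, which the paper asserts without proof, is also sound: counting exactly rather than through (i)--(ii), $|I|$ drops by at most one per unit increase of the prefix length $m$, and $|E|\ge T/(h+1)$ keeps the prefix progression within the edit budget until $I$ empties.

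You are right to be suspicious of the ratio in (i), but the failure is generic, not an edge effect. Whenever $h\ge1$, $\min E\ge h$ and $\max E\le T-1-h$, the run spoils exactly $|E|+h$ scored positions, so
\[
|I|/T'=\rho-(2-\rho)h/T'<\rho-h/T' .
\]
For example, $T=10$, $h=1$, $E=\{5\}$ gives $7/9<0.9-1/9$. The missing slack is recovered only when the run lies near an end of the text. You should therefore commit to the bound $|I|/T'\ge1-(|E|+h)/T'=\rho-(2-\rho)h/T'$. The paper's proof skips this conversion without comment.
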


\begin{proof}
A scored position leaves $I$ only if its window of $h+1$ consecutive
positions meets $E$. For a single run of length $|E|$ the windows
meeting it are those with $t \in [\min E, \max E + h]$, at most
$|E|+h$ of them, which is (i); with $b$ runs the bound is additive,
which is (ii). If $E$ contains every $k$-th position with $k \le h+1$,
every window of $h+1$ consecutive positions contains a member of $E$, so
$I=\emptyset$ and Theorem~\ref{thm:intact} gives zero.
\end{proof}

The consequence is worth stating without euphemism. An adversary who
knows $h$ --- and $h$ is a published design parameter, not a secret ---
can reduce the expected detector statistic to zero while retaining a
fraction $1 - 1/(h+1)$ of the tokens. For the common choice $h=1$ this
means editing one token in two; for $h=3$, one in four. Nothing in the
attack requires knowledge of the key. This is not a new attack so much
as an exact accounting of a known design tension \cite{KGW24, NFL24},
but we have not found it written down in this form, and its corollary
--- that a robustness table indexed by retention rate or by endpoint
similarity is under-specified --- appears not to have been drawn.

\begin{remark}[The exponential family]\label{rem:exp}
Nothing in the proof of Theorem~\ref{thm:intact} used the biasing
mechanism; only the context seeding was used. For the distortion-free
scheme which seeds a uniform vector $\xi$ from the context and scores
$S = \sum_t -\log(1-\xi_{y_t})$, whose null mean is $T'$, the same
conditioning gives
\begin{equation}\label{eq:expfamily}
\mathbb{E}[S_{\mathrm{att}}] - T'
= \frac{|I|}{T'}\bigl(\mathbb{E}[S_0] - T'\bigr),
\end{equation}
and with it the analogues of Corollary~\ref{cor:decay} and
Proposition~\ref{prop:arrangement}. Leaving the output distribution
mathematically untouched buys nothing at all against this particular
weakness: the vulnerability is a property of context seeding, not of
distortion.
\end{remark}

\subsection{Scope}

It is worth being explicit about what has \emph{not} been proved.
Theorem~\ref{thm:intact} concerns substitutions that preserve length.
Insertions and deletions shift the indices, and the correct
generalisation replaces $I$ by the set of positions whose window
survives as a contiguous block of the attacked text; the identity then
holds only approximately. The mean-field hypothesis on $\gamma_w$ is
false in real text, where entropy varies strongly with content, so that
\eqref{eq:intactlaw} is an \emph{upper} bound on what survives in the
wild. And nothing here concerns any undisclosed production scheme: the
deployment recalled in the introduction motivates the question and is
not an object of measurement.

\section{Experiments}\label{sec:experiments}

Three experiments are reported. The first verifies numerically that the
propositions describe the objects the pipeline computes. The second
tests Theorem~\ref{thm:intact} and Proposition~\ref{prop:arrangement}
against exact detectors. The third applies the geometry to real chains
of round-trip machine translation, and executes a test whose criterion
was fixed in writing before any measurement was taken. All code, run
directories and manifests are described in Section~\ref{sec:repro}.

\subsection{The geometry}

Over $300$ pseudo-random chains in each of the dimensions
$n \in \{8,16,64,384\}$, with chain lengths in $\{2,3,4,6,8\}$, the
worst deviations observed were: minimum eigenvalue of
$I_n - \mathrm{R}^{*}_\Uc$ equal to $-6.1\times10^{-15}$; negative index
$n_- = 0$ in every case; $|n_+ - 2p| = 0$ in every case;
$\|H\hat{v}_0-\hat{v}_0\| = 3.3\times10^{-12}$; and
$\|\mathrm{R}_{\mathrm{dir}}H - \mathrm{R}_\Uc\| = 5.9\times10^{-12}$. This is
Propositions~\ref{prop:sig} and \ref{prop:holonomy} at machine
precision.

Independence, part (iv) of Proposition~\ref{prop:holonomy}, is
exhibited directly. A family of chains with a prescribed common
endpoint, wandering through an increasing number of extra dimensions,
gives a semantic deficit constant at $0.12241744$ with spread exactly
zero, while the holonomy energy runs from $0$ to $1.0465$
(Table~\ref{tab:realisability}).

\begin{table}[t]
\centering
\small
\begin{tabular}{cccc}
\toprule
detours & $\delta_\Uc$ & $\eta(\Uc)$ & $\sgn(I_n-\mathrm{R}^{*}_\Uc)$\\
\midrule
0 & 0.12241744 & 0.0000 & $(2,0,62)$\\
1 & 0.12241744 & 0.2454 & $(2,0,62)$\\
2 & 0.12241744 & 0.5190 & $(4,0,60)$\\
3 & 0.12241744 & 0.6898 & $(4,0,60)$\\
4 & 0.12241744 & 0.8260 & $(6,0,58)$\\
5 & 0.12241744 & 0.9427 & $(6,0,58)$\\
6 & 0.12241744 & 1.0465 & $(8,0,56)$\\
\bottomrule
\end{tabular}
\caption{\emph{This table summarizes the independence of the endpoint
and path data, in dimension $n=64$. All seven chains share the same
first and last state, so the semantic deficit is constant to the last
recorded digit; the holonomy energy is not. Note also that the
signature, in the last column, moves in steps and is constant on pairs
of rows, exactly as Proposition~\ref{prop:sig} predicts it must.}}
\label{tab:realisability}
\end{table}

Corollary~\ref{cor:gaussbonnet} is verified on random geodesic triangles
of the two-sphere: for the $159$ of $200$ triangles of area at most
$\pi$, the holonomy angle and the spherical excess agree to better than
$1.8\times 10^{-9}$; the remainder agree after reduction to the
principal branch, as the corollary says they must. Finally, the
blindness of the signature is visible in the data: among random chains
sharing the signature $(8,0,56)$, the rotation energy ranges over
$[0.937,1.464]$ and the holonomy energy over $[0.365,0.791]$.

\subsection{The intact-window law against exact detectors}

To test Theorem~\ref{thm:intact} one needs a detector whose key one
holds and a generator whose entropy one controls. We therefore emit
from a Zipf distribution over a vocabulary of $4000$ with a freshly
permuted support at each step, which fixes the entropy by construction,
and watermark it with our own implementations of the green-list scheme
at $h \in \{0,1,2,3\}$ and of the context-seeded exponential scheme,
using $\gamma = 0.25$ and $\delta_b = 2.0$. Sequences are $400$ tokens
long and each condition is averaged over $120$ of them.

Under independent edits, the mean absolute deviation between the
observed ratio and $\rho^{h+1}$, over the seven values of $\rho$ from
$1$ down to $0.5$, is $0.0023$, $0.0023$, $0.0022$ and $0.0021$ for
$h=0,1,2,3$, and $0.0011$ for the exponential scheme. At $\rho = 0.5$ the observed ratios
are $0.499$, $0.247$, $0.125$ and $0.065$ against the predicted
$0.500$, $0.250$, $0.125$ and $0.063$. Corollary~\ref{cor:decay} is
confirmed across two decades of the ratio and across both families.

The consequence of Proposition~\ref{prop:arrangement} is more striking,
and is collected in Table~\ref{tab:arrangement}. At one and the same
retention rate, the residual signal depends only on where the edits
fall. At $\rho = 0.5$ with $h=1$, the same half of the tokens survives
in all three columns and the residual statistic is one half of the
original, one quarter of it, or nothing at all.

\begin{table}[t]
\centering
\small
\begin{tabular}{cccc}
\toprule
$\rho$ & independent & contiguous block & periodic\\
\midrule
0.95 & 0.904 & 0.947 & 0.903\\
0.90 & 0.806 & 0.897 & 0.802\\
0.80 & 0.637 & 0.795 & 0.595\\
0.70 & 0.492 & 0.697 & 0.399\\
0.60 & 0.362 & 0.594 & 0.196\\
0.50 & 0.247 & 0.488 & $-0.005$\\
\bottomrule
\end{tabular}
\caption{\emph{Residual detector statistic, as a fraction of the
original, for the green-list scheme with $h=1$ under three edit
patterns of identical retention rate. Counting the intact-window set
directly from each edit pattern, Theorem~\ref{thm:intact} predicts
$0.947, 0.897, 0.797, 0.697, 0.597, 0.496$ for the middle column and
$0.902, 0.802, 0.602, 0.401, 0.201, 0.000$ for the right-hand one. The
largest discrepancy is $0.008$ and the typical one $0.003$, over $120$
sequences per cell; and the periodic pattern at $\rho = 0.5$ is
predicted to give exactly zero, and does.}}
\label{tab:arrangement}
\end{table}

\subsection{Real transformation chains}\label{sec:corpus}

The two experiments reported so far measure objects that we ourselves
built. The third measures text.

From thirty open-ended prompts spread over five domains we generate,
with a $0.5$B instruction-tuned model at temperature one, one hundred
and eighty new tokens under each of the three schemes: the green-list
scheme at $h=1$, the unigram scheme at $h=0$, and the context-seeded
exponential scheme. The keys are ours, so every detector statistic
below is exact and not an estimate. The ninety passages so obtained
have median $z_0$ equal to $9.62$, $9.30$ and $19.05$ respectively, and
every single one of them is detected above $z = 4$: they are the
population on which an attack can be said to do anything at all.

Each passage then travels six chains of round-trip machine translation,
sentence by sentence, through the Opus-MT models: three single round
trips, through German, French and Spanish; two chains of two pivots,
one through German and then French and one through German twice; and a
three-pivot detour through French, German and Spanish. Every return to
English is a waypoint at which the exact detector is run, and every
waypoint --- the German, French and Spanish ones included --- is
embedded by a multilingual encoder of dimension $384$, so that the
invariants of Sections~\ref{sec:signature} and \ref{sec:holonomy} are
computed along the chain as it is actually traversed and not along its
English shadow. The retention rate $\rho$ and the intact-window
fraction $|I|/T'$ are not modelled but measured, by
longest-common-subsequence alignment on the tokenizer's own token ids,
which are the objects the detector sees. Consecutive waypoints never
come near to being antipodal --- over all $540$ chains the smallest
cosine similarity between neighbours is $0.148$ --- so the minimal
rotation of Section~\ref{sec:loops} is everywhere well defined.

\begin{table}[t]
\centering
\small
\begin{tabular}{lccccccc}
\toprule
chain & $L$ & $\delta_\Uc$ & $\eta(\Uc)$ & $\rho$ & $|I|/T'$ &
residual & detected\\
\midrule
Spanish                 & 2 & 0.064 & 0.083 & 0.767 & 0.679 & 0.680 & 84/90\\
German                  & 2 & 0.071 & 0.092 & 0.733 & 0.651 & 0.633 & 83/90\\
French                  & 2 & 0.073 & 0.094 & 0.722 & 0.616 & 0.625 & 82/90\\
German twice            & 4 & 0.082 & 0.101 & 0.692 & 0.598 & 0.591 & 82/90\\
German, French          & 4 & 0.109 & 0.118 & 0.644 & 0.531 & 0.554 & 67/90\\
French, German, Spanish & 6 & 0.116 & 0.129 & 0.594 & 0.501 & 0.496 & 66/90\\
\bottomrule
\end{tabular}
\caption{\emph{The six chains; medians over the ninety passages of the
three schemes, except the last column, which counts the chains still
detected above $z = 4$. Ordered by semantic deficit, every column moves
monotonically: the meaning drifts, the path lengthens, the surface is
retained less and the mark fades, all together. That is precisely why
the endpoint alone cannot be read as a measure of attack strength, and
why the test below holds it fixed.}}
\label{tab:chains}
\end{table}

Table~\ref{tab:chains} collects what the chains do. Of the $540$
attacked passages, $464$ are still detected above $z = 4$ --- $173$ of
the $180$ unigram chains, $165$ of the exponential ones and $126$ of the
green-list ones --- so round-trip translation at this depth is an
erosion and not an erasure. It is worth seeing where it does erase. The
green-list scheme carried through French, German and Spanish has a
median residual $z$ of $3.98$, which is below the threshold, while the
same three pivots leave the unigram scheme at $5.88$ and the exponential
one at $8.90$; but the exponential scheme starts from a much higher
$z_0$, so the honest comparison is between fractions, and there the
context-free scheme retains $0.685$ of its statistic against $0.509$ and
$0.601$ for the two schemes that seed on a context. That is the ordering
Corollary~\ref{cor:decay} demands: what cannot be broken is a window of
one.

\medskip
\noindent\emph{The intact-window law on real text.}
Theorem~\ref{thm:intact} was proved for substitutions that preserve
length, and translation preserves nothing of the kind. The correction is
not a new hypothesis but the same theorem with the normalisation of the
statistic carried through: all three detectors divide by the square root
of the number of scored positions, so if the attacked text offers
$T'_{\mathrm{att}}$ of them against the original's $T'_0$, then
\begin{equation}\label{eq:lencorrected}
\frac{\mathbb{E}[z_{\mathrm{att}}]}{\mathbb{E}[z_0]}
= \frac{|I|}{\sqrt{T'_0\,T'_{\mathrm{att}}}}
= \frac{|I|}{T'_0}\,\sqrt{\frac{T'_0}{T'_{\mathrm{att}}}},
\end{equation}
the intact-window fraction times a factor which is one when the length
is preserved. Table~\ref{tab:thmc} compares both forms with what the
detectors actually returned.

\begin{table}[t]
\centering
\small
\begin{tabular}{lccccccc}
\toprule
scheme & $h$ & $\rho$ & $\rho^{h+1}$ & $|I|/T'$ &
$T'_{\mathrm{att}}/T'_0$ & \eqref{eq:lencorrected} & observed\\
\midrule
green-list  & 1 & 0.700 & 0.490 & 0.542 & 0.972 & 0.553 & 0.509\\
unigram     & 0 & 0.694 & 0.694 & 0.694 & 0.972 & 0.708 & 0.685\\
exponential & 1 & 0.692 & 0.478 & 0.531 & 0.978 & 0.535 & 0.601\\
\bottomrule
\end{tabular}
\caption{\emph{Theorem~\ref{thm:intact} against the $538$ chains whose
statistic is finite; medians. The measured intact-window fraction
predicts the median residual to within one hundredth for the
context-free scheme, three for the green-list scheme and seven for the
exponential one. The independent-edit corollary $\rho^{h+1}$, which
needs no measurement of the attacked text at all, happens to fall closer
for the green-list scheme and much further for the exponential one,
where it is off by twelve hundredths; and chain by chain it is the
measured fraction that follows the residual, correlating $+0.67$ with it
against $+0.54$ for the retention rate. The median absolute error per
chain is $0.083$, $0.054$ and $0.088$, which the length correction of
\eqref{eq:lencorrected} moves to $0.086$, $0.067$ and $0.081$: at this
depth of translation the length is preserved in the median, and the
correction has little to do.}}
\label{tab:thmc}
\end{table}

The medians agree; the scatter chain by chain does not vanish, and it
should not, since the mean-field hypothesis of
Section~\ref{sec:watermark} is false in real text, where entropy varies
from one sentence to the next. One caution about the instrument is due
here: the intact-window count is read off the longest common
subsequence of the two token strings and does not verify that a
surviving window is still contiguous in the attacked text, so $|I|/T'$
is an upper bound on the number of intact windows the detector really
meets. Both green-list schemes do come out just below it. The
exponential scheme comes out above, which we record without explaining:
its statistic is a sum of continuous scores and not a count of
successes, and the mean-field hypothesis bites differently there. What is more interesting is a sign.
Pooled over all chains, the correlation between the intact-window
fraction and the residual is $+0.51$ for the green-list scheme and
$+0.67$ for the exponential one, but $-0.21$ for the unigram scheme ---
as though, for the one scheme where the law is simplest, retaining more
of the text destroyed more of the mark. It does not. A pooled
correlation compares passages with one another, and passages differ in
entropy, in length, and in how much watermark was ever in them; the law
speaks about one passage carried along attacks of differing severity.
Computed within each passage, across the six chains that passage
travels, the correlation is $+0.73$ for the green-list scheme and
$+0.68$ for each of the other two, and it is positive in $28$, $28$ and
$29$ of the thirty passages. The anomaly is
an instance of Simpson's paradox, and we report it because the pooled
number, taken by itself, would have been read as evidence against a
theorem which the same data in fact support.

One family of chains deserves to be named rather than averaged away. In
nine chains the residual $z$ falls below $-3$, and in two more the
exponential $p$-value underflows to one, which sends its
normal-equivalent statistic to $-\infty$ and removes those two from the
regressions below. All but one of the eleven show the text expanding,
in the extreme case from $180$ tokens to $881$, and they come from only
five of the ninety passages: the translator has fallen into repetition.
Repetition is exactly the circumstance in which the random-oracle
hypothesis fails outright, since one repeated pair of context and token
is scored again and again and the effective number of independent
positions collapses. Such chains are reported and not trimmed.

\medskip
\noindent\emph{The preregistered test.}
The criterion was fixed in writing before any measurement was taken. In
a regression of the residual ratio on the retention rate, the semantic
deficit and the holonomy energy, the partial coefficient on $\eta$ was
required to be negative and significant at $\alpha = 0.01$, Bonferroni
corrected for the three schemes, in at least two of them; failing that,
the path-dependence claim was to be recorded as refuted and reported as
a negative result. Table~\ref{tab:resultd} gives the outcome. The
criterion is met. The coefficient on $\eta$ is negative in all three
schemes and clears the corrected threshold of $3.3\times10^{-3}$ in two
of them, the unigram scheme by five orders of magnitude and the
green-list scheme by a factor of two.

\begin{table}[t]
\centering
\small
\begin{tabular}{lccccccc}
\toprule
scheme & $n$ & $R^2$ & $\beta_\rho$ & $\beta_\delta$ & $\beta_\eta$ &
$p_\eta$ & clustered\\
\midrule
green-list  & 180 & 0.41 & $+0.051$ & $-0.023$ & $-0.070$ &
  $1.7\times10^{-3}$ & $3.5\times10^{-2}$\\
unigram     & 180 & 0.49 & $+0.064$ & $+0.278$ & $-0.145$ &
  $4.0\times10^{-9}$ & $8.2\times10^{-4}$\\
exponential & 178 & 0.34 & $+0.116$ & $-0.060$ & $-0.014$ &
  $0.74$ & $0.77$\\
\bottomrule
\end{tabular}
\caption{\emph{The preregistered regression. The predictors are
standardised, so that the coefficients may be compared; the corrected
threshold is $3.3\times10^{-3}$. The last column is a robustness check
which the clause did not ask for: the same coefficient with a standard
error clustered on the base passages, of which there are only thirty, so
that the check is a severe one. Under it the unigram scheme still clears
the threshold and the green-list scheme no longer does.}}
\label{tab:resultd}
\end{table}

We would rather state the result at its strongest defensible level than
at its most flattering one. The six chains of a scheme are applied to
the same thirty passages, so the observations are clustered and the
classical standard error is optimistic. Clustered on the passage, the
unigram coefficient stands at $p = 8.2\times10^{-4}$ and the green-list
one moves to $p = 3.5\times10^{-2}$, which is significant at five per
cent and not at the corrected one. Absorbing the passage altogether ---
so that each chain is compared only with the other five chains of the
same text, which is the comparison this whole article is about --- the
coefficient on $\eta$ becomes $-0.11$ for the green-list scheme,
$-0.14$ for the unigram one and $-0.13$ for the exponential one: all
negative, and now of one size, with $p$ equal to $2.5\times10^{-2}$,
$2.1\times10^{-4}$ and $0.13$ respectively. The honest summary is this. With the
passage held fixed, a chain that wanders further destroys more of the
mark at equal retention and equal endpoint, by an amount which is stable
across all three schemes; the evidence that this is not chance is
decisive for the context-free scheme, good for the green-list scheme and
inconclusive for the exponential one; and thirty clusters are few.

One further number belongs in the reader's hands before the result is
weighed. In this corpus the two geometric quantities are far from
independent: the correlation between $\delta_\Uc$ and $\eta(\Uc)$ is
$+0.85$ for the green-list scheme, $+0.91$ for the unigram one and
$+0.90$ for the exponential one, and within a passage it is scarcely
lower, at $+0.86$, $+0.89$ and $+0.80$. Round-trip translation
lengthens the path and moves the endpoint together;
Proposition~\ref{prop:holonomy}(iv) says that the two data are free, but
this corpus does not exercise that freedom, and a family of chains
engineered to hold the one while varying the other --- a larger
experiment than this one --- would be the natural next step. Two
consequences follow. The separate coefficients of
Table~\ref{tab:resultd} are not separate effects: two nearly collinear
predictors entering with opposite signs are a suppression pair, and that
is what produces the positive coefficient on the semantic deficit in the
unigram row --- a coefficient which survives absorbing the passage, and
which we can describe but not explain. And the preregistered test is a
demanding one, since once $\delta$ is held only a fifth to a quarter of
the variation of $\eta$ is left to carry any effect at all; that it is
significant nonetheless, in two schemes of three, is the fact worth
taking away.

The matched-delta strata, which are the design the criterion was written
for, tell the same story from the other side. Within the four bands of
semantic deficit the rank correlation between the holonomy energy and
the residual is negative in all four for the green-list scheme, at
$-0.46$, $-0.46$, $-0.23$ and $-0.53$. And among the ninety-five chains
whose semantic deficit lies between $0.05$ and $0.07$ --- as matched an
endpoint as this instrument can deliver --- the holonomy energy ranges
over a factor of nearly three, from $0.053$ to $0.146$, while the
residual runs from $0.285$ to $0.998$. Two chains may bring a text to
the same meaning and leave, the one of them, essentially the whole mark,
and the other, less than a third of it.

\medskip
\noindent\emph{What the encoder can and cannot see.}
One caveat is ours to raise before a referee raises it. The encoder is
multilingual, and a multilingual encoder is by design nearly invariant
under translation: it places a German waypoint almost on top of its
English source. That is the property which makes it the right instrument
for the semantic deficit and a poor one for the path, since it flattens
the very excursion the holonomy is meant to record. The effect is
measurable: over the $268$ chains with more than one pivot the median
holonomy energy is $0.117$ along the full chain and $0.056$ when only
the English waypoints are kept. The remaining $270$ chains are single
round trips, whose two English waypoints determine no holonomy whatever
--- a loop through two points is its own direct rotation and $H$ is the
identity exactly. Repeating the regression on the English waypoints
alone, which is exploratory and was not preregistered, the coefficient
on the holonomy energy is $-0.128$ for the unigram scheme
($p = 5.4\times10^{-6}$), $-0.081$ for the exponential one
($p = 0.065$) and $-0.029$ for the green-list one ($p = 0.41$). It does
not change the verdict, and it was not permitted to.

Three things this experiment does not establish. It does not establish
that $\eta$ is causal: a chain that wanders further is also a chain that
has been rewritten more, and nothing reported here separates the two. It
does not transfer to learned paraphrasers, which choose their path
adversarially where round-trip translation chooses it only
incidentally. And it rests on a single embedder, so that $\delta$ and
$\eta$ are both defined by one encoder's idea of meaning; replication
across encoders is the first thing we should do with more compute than
this article had.

\subsection{Reproducibility}\label{sec:repro}

Everything described in this section is public, and lives in the
repository
\url{https://github.com/DCorradetti/linguistic-holonomy}; the paths
below are paths within it. 

The watermark keys are ours, so every detection statistic reported here
is ground truth and not an estimate. A detector whose key one holds is
nevertheless still an implementation, and an implementation deserves to
be audited before it is believed; since the three schemes are
re-implemented here rather than imported, there are no published numbers
to reproduce, and what remains available is self-calibration. On $400$
token streams of length $400$ not produced with the key, the realised
green fraction is $\gamma$ to within sampling error ($0.2475$, $0.2508$
and $0.2498$ for $h = 1,2,3$ against $\gamma = 0.25$); the null
$z$-statistic has mean and standard deviation $(-0.11, 0.99)$,
$(0.04, 0.97)$ and $(-0.01, 1.03)$ in the same three conditions, at
Kolmogorov--Smirnov distance $0.070$, $0.064$ and $0.056$ from the
standard normal; and no stream in any condition reached $z = 4$.
Watermarked text scored with a key other than the one that produced it
gives a mean $z$ of $-0.03$, $+0.03$ and $-0.29$ for the green-list, the
unigram and the exponential scheme, with unit standard deviation and no
false alarm, while the same text scored with the correct key is detected
in every single case, at mean $z$ of $20.8$, $20.9$ and $37.0$. The
$p$-values of the exponential scheme are uniform under the null
(Kolmogorov--Smirnov $0.044$, $p = 0.40$).

One asymmetry is worth recording, because it belongs to the scheme and
not to our implementation of it. For $h = 0$ the green list is fixed for
the whole text, so the null is key-dependent by construction: the
statistic is centred not on $\gamma$ but on the green fraction of the
particular key, and that displacement has standard deviation
$\sqrt{T/|V|}$ across keys. Over $24$ keys we measure $0.328$ against
the predicted $0.316$. At the synthetic vocabulary of $4000$ used for
the calibration the effect is plainly visible; at the vocabulary of a
real tokenizer it is $0.03$ and negligible. The context-free scheme is
thus marginally the harder of the two to calibrate, which is the reverse
of the robustness ordering established in Section~\ref{sec:watermark},
and a small irony of the design space.

\begin{table}[t]
\centering
\small
\begin{tabular}{lll}
\toprule
result & script & run stamp\\
\midrule
Props.~\ref{prop:sig}, \ref{prop:holonomy}, Cor.~\ref{cor:gaussbonnet}
  & \texttt{exp\_geometry\_v1} & \texttt{20260819T135822}\\
Thm.~\ref{thm:intact}, Cor.~\ref{cor:decay}, Prop.~\ref{prop:arrangement}
  & \texttt{exp\_decay\_v1} & \texttt{20260819T135047}\\
detector calibration
  & \texttt{exp\_detector\_validation\_v1} & \texttt{20260819T141737}\\
watermarked corpus and chains
  & \texttt{exp\_corpus\_v1} & \texttt{20260819T154350}\\
loop invariants of the chains
  & \texttt{exp\_indicators\_v1} & \texttt{20260819T161524}\\
the preregistered test
  & \texttt{exp\_analysis\_v1} & \texttt{20260819T163238}\\
audit of the regeneration
  & \texttt{exp\_corpus\_v1} & \texttt{20260819T174004}\\
\bottomrule
\end{tabular}
\caption{\emph{Every numerical claim of Section~\ref{sec:experiments}
and the run that produced it. Directory names are the script name
prefixed by \texttt{run\_} and suffixed by the stamp of the third
column, under \texttt{7.~Results/Article\_LLW/}. The corpus was built in
three successive invocations, each carrying forward the chains of the
one before, so that an interruption on a machine of this size would cost
at most one of them; the stamp given is that of the last, whose manifest
records the provenance of the other two. The final row is the targeted
re-audit of the records which had failed to regenerate, discussed
below.}}
\label{tab:runs}
\end{table}

The models are open-weights and named in the manifests: a $0.5$B
instruction-tuned generator, the Opus-MT sentence translators for the
three pivot languages, and a multilingual sentence encoder of dimension
$384$ for the embedding $\psi$. Generation is deterministic given the
model revision, the seed and the key --- very nearly, and the exception
is worth a paragraph, since it is the sort of thing a reproducibility
section usually asserts without looking. We looked. Regenerating all
ninety stored completions from scratch and comparing them token by
token, eighty-seven came back identical and three diverged, each after a
long common prefix: $82$, $153$ and $173$ tokens of $180$. Regenerating
the divergent ones again is instructive. Two of them diverge every
single time, four attempts of four, and always at exactly the same
token; the third reproduces itself four times of four, as does a fourth
record which had diverged in an earlier sweep. Regeneration is therefore
deterministic within a process and not across processes: the order in
which a CPU kernel accumulates a sum depends on the state of the process
it runs in, and a difference in the last bits of a logit is enough to
move a multinomial sampling boundary, after which the continuation goes
its own way. Two passages of the ninety sit close enough to such a
boundary that the process which produced them cannot now be reproduced
on this machine at all. Every number in this article is computed from
the stored corpus, whose hash is in the manifest, and the audit says how
faithfully that corpus regenerates --- a weaker claim than bit-exact
reproducibility, and the true one.

This is deliberately the small, CPU-sized version of the testbed: the
scale-up to a larger generator and to learned paraphrasers is stated in
Section~\ref{sec:conclusions} as work to be done, and is not claimed
here. The scripts, the generated corpora, the run directories with their
logs and manifests, and the derived measurements from which every number
of this section is computed are all in the repository named at the head
of this section; the manifests carry the hashes, so that a regeneration
may be checked against what was actually run.

\section{Conclusions and Future Developments}\label{sec:conclusions}

In this work we have taken up the formalism of linguistic loops and
carried it to a place where it can be tested. We proved that the
Sylvester signature proposed as the invariant of a loop is always
positive semidefinite and therefore degenerates to a rank, the complete
invariant being the angle spectrum; that the loop rotation is parallel
transport on the unit sphere of the embedding space, so that the chain
of reformulations is a path and its residue a holonomy; that this
holonomy lies in the stabiliser of the initial state and is exactly what
the semantic deficit discards, the two data being functionally
independent; and, on the side of the detector, that the residual
statistic of a context-seeded watermark is proportional to the number of
positions whose seeding window survived intact.

It is mesmerizing, and we cannot help but observe it, how neatly the two
halves of the picture fit. What a meaning-preserving chain preserves is
an invariant; what a watermark occupies is the complement of that
invariant; and the geometric object which measures the complement --- a
holonomy in $\SO(n-1)$ --- turns out on the two-sphere to be nothing
more exotic than the area swept by the path. That the same construction
which was proposed as a way toward pre-verbal thought should also
measure the erosion of a provenance mark is, in our humble opinion, a
point in favour of the construction.

Three limitations are ours to state. The intact-window identity assumes
substitutions that preserve length and a mean-field hypothesis on the
strength of the mark; in real text, where entropy varies with content,
it is an upper bound. The holonomy energy $\eta$ is a lossy scalar
reduction of $H$, and a null result for the former is not a null result
for the latter. And the empirical layer of this article is bounded by
the compute available to it: a small generator, one attack family,
one embedder.

It would be then definitely interesting to analyse whether a
\emph{holonomy-aware} watermark can be constructed --- one whose seeding
depends on a quantity invariant under the transport, rather than on the
raw token window, and which would therefore be insensitive to the
arrangement of the edits in the way that Proposition~\ref{prop:arrangement}
shows the present schemes are not. The semantic schemes of
\cite{Hou24a} are a first step in that direction, and it would be
natural to evaluate them with the instrument developed here rather than
with an endpoint similarity. In a forthcoming work we intend to study
the holonomy of chains produced by learned paraphrasers rather than by
translation, where the path is chosen adversarially and the geometry
should be correspondingly richer. All in all, we have presented evidence
that the robustness of a watermark is a functional of the path and not
of its endpoint, and that the natural language in which to say so is
that of parallel transport.

\section*{Use of Generative AI}

The implementation of the reproducibility certificates listed in
Section~\ref{sec:repro} was supported by AI-assisted code generation
(Claude Opus, Anthropic) under the author's direction; all certificates
were independently inspected, run, and validated by the author. AI tools
were also used for language editing during manuscript preparation. The
author conceived the mathematical content, designed and verified the
proofs, and takes full responsibility for the content of this article.

\section*{Acknowledgments}


\end{document}